\def\figref#1{figure~\ref{#1}}
\def\Figref#1{Figure~\ref{#1}}
\def\eqref#1{equation~\ref{#1}}
\def\1{\bm{1}}
\def\rva{{\mathbf{a}}}
\def\rvo{{\mathbf{o}}}
\def\rvx{{\mathbf{x}}}
\def\rvy{{\mathbf{y}}}
\def\rvz{{\mathbf{z}}}
\DeclareMathAlphabet{\mathsfit}{\encodingdefault}{\sfdefault}{m}{sl}
\SetMathAlphabet{\mathsfit}{bold}{\encodingdefault}{\sfdefault}{bx}{n}
\def\sR{{\mathbb{R}}}
\newcommand{\pdata}{p_{\rm{data}}}
\newcommand{\E}{\mathbb{E}}
\newcommand{\normlone}{L^1}
\newcolumntype{C}[1]{>{\centering\arraybackslash}p{#1}}
\title{Composition and Decomposition of GANs}
\author{Yeu-Chern Harn \\ 
Department of Computer Science\\
University of North Carolina at Chapel Hill\\
\texttt{ycharn@cs.unc.edu} \\
\And
Zhenghao Chen \& Vladimir Jojic \\
Calico Life Sciences \\
\texttt{\{chen.zhenghao, vjojic\}@gmail.com} 
}
\newtheorem{theorem}{Theorem}
\newtheorem{definition}{Definition}
\newtheorem{lemma}{Lemma}
\newtheorem{assumption}{Assumption}
\newtheorem{corollary}{Corollary}
\newcommand{\range}[1]{\textrm{range}(#1)}
\newcommand{\norm}[1]{\left\lVert #1 \right\rVert}
\begin{document}

\maketitle

\begin{abstract}
In this work, we propose a composition/decomposition framework for adversarially training generative
models on composed data -  data where each sample can be thought of as being constructed from a fixed number of 
components. In our framework, samples are generated by sampling components from component generators and feeding these
components to a composition function which combines them into a {\bf``composed sample"}. 
This compositional training approach improves the modularity, extensibility and interpretability of Generative
Adversarial Networks (GANs) - providing a principled way to incrementally construct complex models out of simpler component
models, and allowing for explicit ``division of responsibility" between these components.
Using this framework, we define a family of learning tasks and evaluate their feasibility on two datasets
in two different data modalities (image and text). Lastly, we derive sufficient conditions such that these compositional
generative models are identifiable. Our work provides a principled approach to building on pre-trained generative models
or for exploiting the compositional nature of data distributions to train extensible and interpretable models.
    
\end{abstract}

\section{Introduction}

Generative Adversarial Networks (GANs) have proven to be a powerful framework for training generative models that are
able to produce realistic samples across a variety of domains, most notably when applied to natural images. However,
existing approaches largely attempt to model a data distribution directly and fail to exploit the compositional nature
inherent in many data distributions of interest. In this work, we propose a method for training {\bf compositional
generative models} using adversarial training, identify several key benefits of compositional training
and derive sufficient conditions under which compositional training is identifiable.

This work is motivated by the observation that many data distributions, such as natural images, are compositional in
nature - that is, they consist of different components that are combined through some composition process.
For example, natural scenes often consist of different objects, composed via some combination of scaling, rotation,
occlusion etc. Exploiting this compositional nature of complex data distributions, we demonstrate that one can both
incrementally construct models for composed data from component models and learn component models from composed data
directly.

In our framework, we are interested in modeling {\bf composed data distributions} -
distributions where each sample is constructed from a fixed number of simpler sets of objects.
We will refer to these sets of objects as {\bf components}.
For example, consider a simplified class of natural images consisting of a foreground object superimposed on
a background, the two components in this case would be a set of foreground objects and a set of backgrounds.
We explicitly define two functions: {\bf composition} and {\bf decomposition}, as well as a set of {\bf component generators}.
Each component generator is responsible for modeling the marginal distribution of a component while the composition
function takes a set of component samples and produce a composed sample (see \figref{fig:comp_decomp}).
We additionally assume that the decomposition function is the inverse operation of the composition function.

We are motivated by the following desiderata of modeling compositional data:
\begin{itemize}
    \item {\bf Modularity:} Compositional training should provide a principled way to reuse off-the-shelf or pre-trained
        component models across different tasks, allowing us to build increasingly complex generative models from simpler ones.
    \item {\bf Interpretability:} Models should allow us to explicitly incorporate prior knowledge about the compositional
        structure of data, allowing for clear ``division of responsibility" between different components.
    \item {\bf Extensibility:} Once we have learned to decompose data, we should be able to learn component models for
        previously unseen components directly from composed data.
    \item {\bf Identifiability:} We should be able to specify sufficient conditions for composition under which
        composition/decomposition and component models can be learned from composed data.
\end{itemize}

Within this framework, we first consider four learning tasks (of increasing difficulty) which range from learning only
composition or decomposition (assuming the component models are pre-trained) to learning composition, decomposition and
all component models jointly. 

To illustrate these tasks, we show empirical results on two simple datasets: MNIST digits superimposed on a
uniform background and the Yelp Open Dataset (a dataset of Yelp reviews). We show examples of when some of these tasks
are ill-posed and derive sufficient conditions under which tasks 1 and 3 are identifiable. Lastly, we demonstrate the
concept of modularity and extensibility by showing that component generators can be used to inductively learn other
new components in a chain-learning example in section~\ref{chain-learning}.

The main contributions of this work are:
\begin{enumerate}
	\item We define a framework for training compositional generative models adversarially.
	\item Using this framework, we define different tasks corresponding to varying levels of prior knowledge and
        pre-training. We show results for these tasks on two different datasets from two different data modalities,
        demonstrating the lack of identifiability for some tasks and feasibility for others.
    \item We derive sufficient conditions under which our compositional models are identifiable.
\end{enumerate}

\subsection{Related work}

Our work is related to the task of disentangling representations of data and the discovery of independent factors of
variations (\cite{bengio2013pami}). Examples of such work include: 1) methods for evaluation of the level of disentaglement
(\cite{eastwood2018iclr}), 2) new losses that promote disentaglement (\cite{ridgeway2018nips}), 3) extensions of
architectures that ensure disentanglement (\cite{kim2018icml}).
Such approaches are complementary to our work but differ in that we explicitly decompose the 
structure of the generative network into independent building blocks that can be split off and reused through composition. 
We do not consider decomposition to be a good way to obtain disentangled representations, due to the complete decoupling of
the generators. Rather we believe that decomposition of complex generative model into component generators, provides
a source of building blocks for model construction.
Component generators obtained by our method trained to have disentangled representations could yield interpretable and
reusable components, however, we have not explored this avenue of research in this work.

Extracting GANs from corrupted measurements has been explored by \citet{bora2018iclr}.
We note that the noise models described in that paper can be seen as generated by a component generator under our
framework. Consequently, our identifiability results generalize
recovery results in that paper.  Recent work by \citet{azadi2018arxiv} is focused on image composition and fits neatly in
the framework presented here. Along similar lines, work such as \cite{johnson2018cvpr}, utilizes a monolithic
architecture which translates text into objects composed into a scene.
In contrast, our work is aimed at deconstructing the monolithic architectures into component generators.


\begin{table}
\begin{tabular}{|c|C{2cm}|C{2cm}|C{2cm}|C{2cm}|}
\hline 
Method & 
Learn components & 
Learn composition & 
Learn decomposition & 
Generative model\\
\hline
LR-GAN\citep{yang2017iclr}& Background & True & False & True \\
C-GAN \citep{azadi2018arxiv} & False & True & True & False \\
ST-GAN \citep{zhang2017acml} & False & True & False & False \\
InfoGAN \citep{chen2016nips}& False & False & False & True \\
\hline
\end{tabular}
\caption{\label{table:methods} Various GAN methods can learn some, but not all, parts of our framework. These parts may exist implicitly in each of the models, but their extraction is non-trivial.}
\end{table}
\section{Methods} 

\subsection{Definition of framework}

Our framework consists of three main moving pieces:

\textbf{Component generators $g_i(\rvz_i)$} A component generator is a standard generative model. In this paper, 
we adopt the convention that the component generators are functions that maps some noise vector $\rvz$ sampled from standard normal
distribution to a component sample. We assume there are $m$ component generators, from $g_1$ to $g_m$.
Let $\rvo_i := g_i(\rvz_i)$ be the output for component generator $i$.

\textbf{Composition function ($c: (\sR^n)^m \rightarrow \sR^n$)} Function which composes $m$ inputs of dimension $n$ to a single output (composed sample).

\textbf{Decomposition function ($d: \sR^n \rightarrow (\sR^n)^m$)} Function which decomposes one input of dimension $n$ to $m$ outputs (components).
We denote the $i$-th output of the decomposition function by $d(\cdot)_i$.

Without loss of generality we will assume that the composed sample has the same dimensions as each of its components.

Together, these pieces define a ``composite generator'' which generates a composed sample by two steps:
\begin{itemize}
    \item Generating component samples $\rvo_1, \rvo_2, ..., \rvo_m$.
    \item Composing these component samples using $c$ to form a composed sample.
\end{itemize}

The composition and/or decomposition function are parameterized as neural networks.

Below we describe two applications of this framework to the domain of images and text respectively.

\subsection{Example 1: Image with foreground object(s) on a background}
In this setting, we assume that each image consists of one or more foreground object over a background.
In this case, $m \geq 2$, one component generator is responsible for generating the background, and other component
generators generate individual foreground objects.

An example is shown in \figref{fig:comp_decomp}. In this case the foreground object is a single MNIST digit and
the composition function takes a uniform background and overlays the digit over the background.
The decomposition function takes a composed image and returns both the foreground digit and the background with the
digit removed.

\begin{figure}[h]
	\begin{center}
		\includegraphics[scale=.27]{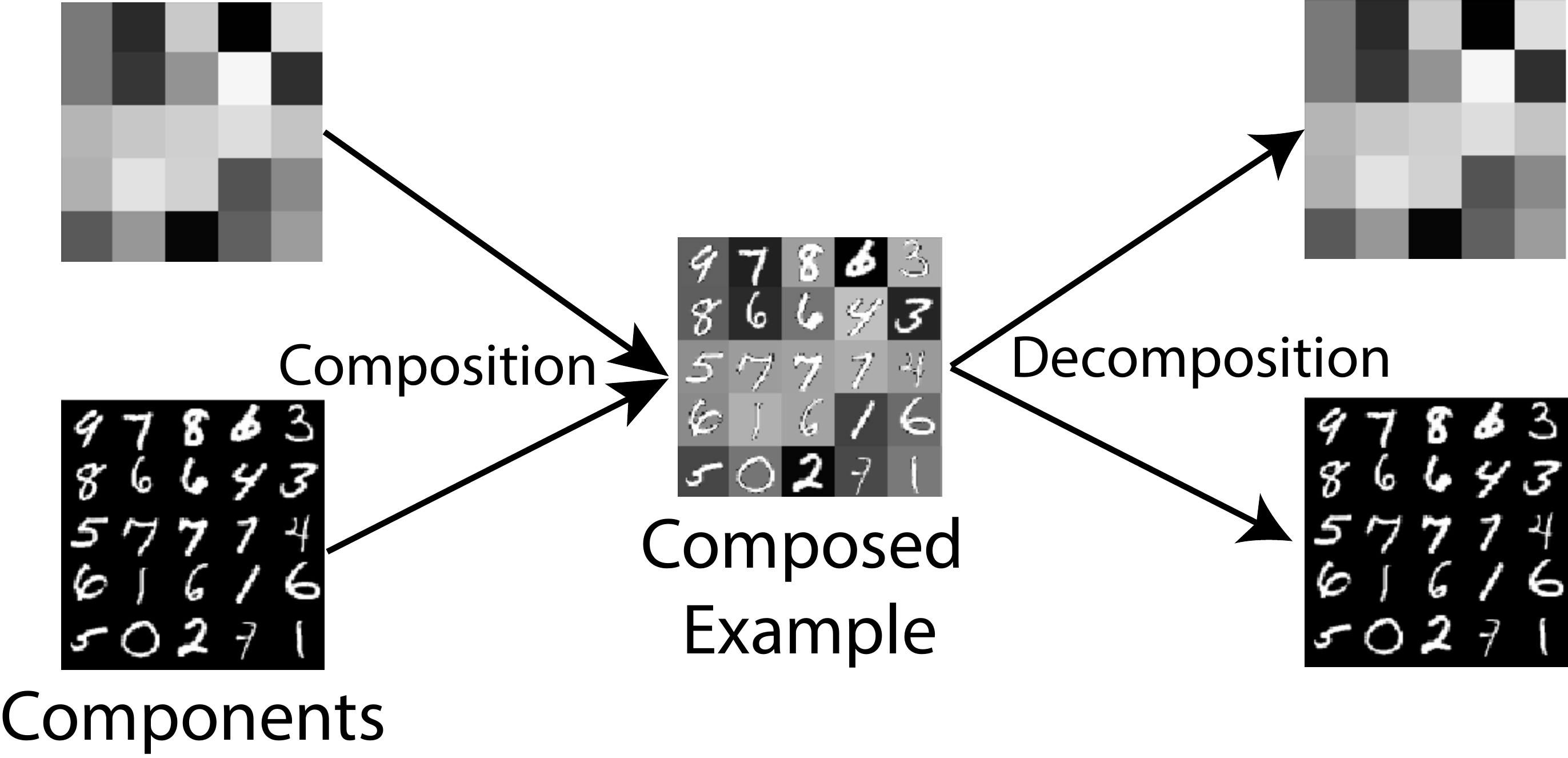}
	\end{center}
	\caption{An example of composition and decomposition for example 1.}
	\label{fig:comp_decomp}
\end{figure}


\subsection{Example 2: Coherent sentence pairs} 
In this setting, we consider the set of adjacent sentence pairs extracted from a larger text. In this case, each component
generator generates a sentence and the composition function combines two sentences and edits them to form a coherent pair. 
The decomposition function splits a pair into individual sentences (see \figref{fig:comp_decomp_c2}).

\vspace{-3mm}
\begin{figure}[h]
	\begin{center}
		\includegraphics[scale=.5]{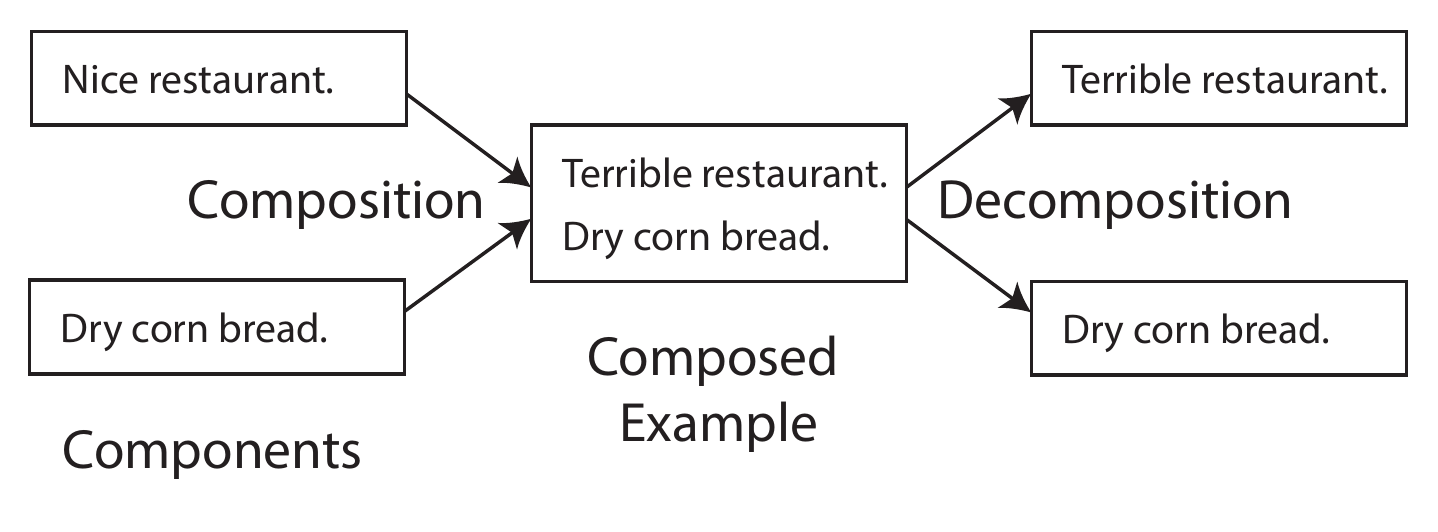}
	\end{center}
	\caption{An example of composition and decomposition for example 2.}
	\label{fig:comp_decomp_c2}
\end{figure}

\subsection{Loss Function\label{Loss_Function}}

In this section, we describe details of our training procedure. For convenience of training, we implement a composition
of Wasserstein GANs introduced in \citet{arjovsky2017icml} ) but all theoretical results also hold for standard adversarial training losses. 

\textbf{Notation} We define the data terms used in the loss function. Let $\rvx_1$ be a component sample. There are $m$ such samples.
Let $\rvy$ be a composite sample be obtained by composition of components $\rvx_1, \rvx_2, ..., \rvx_m$.
For compactness, we use $\rvo_i$ as an abbreviation for $\mathbf{g}_i(\rvz_i)$.
We denote vector $\normlone$ norm by  $\lVert\rva \rVert_1$ ($\lVert\rva \rVert_1 = \sum_i \left\lvert a_i \right\rvert$ ).
Finally, we use capital $D$ to denote discriminators involved in different losses.

\textbf{Component Generator Adversarial Loss ($l_{\mathbf{g_i}}$)} Given the component data, we can train component generator ($\mathbf{g}_i$)
to match the component data distribution using loss 
\[
l_{\mathbf{g_i}} \equiv \mathbb{E}_{\rvx_i \sim \pdata(\rvx_i)} [D_i(\rvx_i)] - \mathbb{E}_{\rvz_i \sim p_\rvz} [D_i(\mathbf{g_i}(\rvz_i))].
\]

\textbf{Composition Adversarial Loss ($l_{\mathbf{c}}$)} Given the component generators and composite data, we can train a composition network such that generated composite samples match the composite data distribution using loss
\[
l_{\mathbf{\mathbf{c}}} \equiv \mathbb{E}_{\rvy \sim \pdata(\rvy)} [D_c(\rvy)] - \mathbb{E}_{\rvz_1 \sim p_{\rvz_1}, ..., \rvz_m \sim p_{\rvz_m} } [D_c(\mathbf{c}(\rvo_1, ..., \rvo_m))]
\]

\textbf{Decomposition Adversarial Loss ($l_{\mathbf{d}}$)} Given the component and composite distributions, we can train a decomposition function $\mathbf{d}$ such that distribution of decomposed of composite samples matches the component distributions using loss
\[
l_{\mathbf{\mathbf{d}}} \equiv  \mathbb{E}_{\rvz_1 \sim p_{\rvz_1}, ..., \rvz_m \sim p_{\rvz_m}} [D_f( \rvo_1, ..., \rvo_m )] -  \mathbb{E}_{\rvy \sim \pdata(\rvy)} [D_f(\mathbf{d}(\rvy))].
\]

\textbf{Composition/Decomposition Cycle Losses ($l_{\mathbf{c-cyc}}, l_{\mathbf{d-cyc}}$)} Additionally, we include
a cyclic consistency loss (\citet{zhu2017unpaired}) to encourage composition and decomposition functions to be inverses of
each other. 
\begin{align*} 
    l_{\mathbf{\mathbf{c-cyc}}} &\equiv \mathbb{E}_{\rvz_1 \sim p_{\rvz_1}, ..., \rvz_m \sim p_{\rvz_m}} \left[ \sum_i \lVert \mathbf{d}(\mathbf{c}(\rvo_1, ..., \rvo_m))_i- \rvo_i \rVert_1 \right] \\
l_{\mathbf{\mathbf{d-cyc}}} &\equiv \mathbb{E}_{\rvy \sim \pdata(\rvy)} \left[\left\lVert\mathbf{c}(\mathbf{d}(\rvy)) - \rvy \right\rVert_1\right]
\end{align*}

Table \ref{tab:losses-table} summarizes all the losses.  Training of discriminators ($D_i, D_c, D_f$ ) is achieved by maximization of their respective losses.
\begin{table}[t]
	\caption{Table for all losses}
	\label{tab:losses-table}
	\begin{center}
		\begin{tabular}{ll}
			\multicolumn{1}{c}{\bf Loss name}  &\multicolumn{1}{c}{\bf Detail}
			\\ \hline \\
			$l_{\mathbf{g_i}}$ & $\mathbb{E}_{\rvx_i \sim \pdata(\rvx_i)} [D_i(\rvx_i)] - \mathbb{E}_{\rvz_i \sim p_\rvz} [D_i(\mathbf{g_i}(\rvz_i))]$\\
			$l_{\mathbf{c}}$ & $ \mathbb{E}_{\rvy \sim \pdata(\rvy)} [D_c(\rvy)] - \mathbb{E}_{\rvz_1 \sim p_{\rvz_1}, ..., \rvz_m \sim p_{\rvz_m} } [D_c(\mathbf{c}(\rvo_1, ..., \rvo_m))] $\\
			$l_{\mathbf{d}}$ & $\mathbb{E}_{\rvz_1 \sim p_{\rvz_1}, ..., \rvz_m \sim p_{\rvz_m}} [D_f( \rvo_1, ..., \rvo_m )] - \mathbb{E}_{\rvy \sim \pdata(\rvy)} [D_f(\mathbf{d}(\rvy))] $\\	
            $l_{\mathbf{\mathbf{c-cyc}}}$ & $\mathbb{E}_{\rvz_1 \sim p_{\rvz_1}, ..., \rvz_m \sim p_{\rvz_m}} \left[ \sum_i \lVert \mathbf{d}(\mathbf{c}(\rvo_1, ..., \rvo_m))_i- \rvo_i \rVert_1 \right]$ \\
			$l_{\mathbf{\mathbf{d-cyc}}}$ & $\mathbb{E}_{\rvy \sim \pdata(\rvy)} \left[\left\lVert\mathbf{c}(\mathbf{d}(\rvy)) - \rvy \right\rVert_1\right]$\\			
		\end{tabular}
	\end{center}
\end{table}

\subsection{Prototypical tasks and corresponding losses}
Under the composition/decomposition framework, we focus on a set of prototypical tasks which involve composite data.

\textbf{Task 1:} Given component generators $\mathbf{g_i}, i \in \{1, \dots , m\}$ and $\mathbf{c}$, train $\mathbf{d}$.

\textbf{Task 2:} Given component generators $\mathbf{g_i}, i \in \{1,  \dots, m\}$, train $\mathbf{d}$ and  $\mathbf{c}$.

\textbf{Task 3:} Given component generators $\mathbf{g_i}, i \in \{1,  \dots, m-1\}$ and $ \mathbf{c}$, train $\mathbf{g_m}$ and $\mathbf{d}$.

\textbf{Task 4:} Given $\mathbf{c}$, train all $ \mathbf{g_i}, i \in \{1,  \dots, m\}$  and $\mathbf{d}$

To train generator(s) in these tasks, we minimize relevant losses:
\[
l_{\mathbf{c}}+l_{\mathbf{d}} + \alpha(l_{\mathbf{\mathbf{c-cyc}}}+l_{\mathbf{\mathbf{d-cyc}}}),
 \] where $\alpha \ge 0$ controls the importance of consistency. While the loss function is the same for the tasks, the parameters to be optimized are different. In each task, only the parameters of the trained networks are optimized.

To train discriminator(s), a regularization is applied. For brevity, we do not show the regularization term (see \citet{petzka2017regularization})  used in our experiments.

The tasks listed above increase in difficulty. We will show the capacity of our framework as we progress through the tasks.

Theoretical results in Section \ref{sec:theorems} provide sufficient conditions under which Task 1. and Task 3. are tractable.

\section{Experiments}
\subsection{Datasets}
We conduct experiments on three datasets:
\begin{enumerate}
\item {\bf MNIST-MB} MNIST digits \citet{lecun-mnisthandwrittendigit-2010} are superimposed on a monochromic one-color-channel background (value ranged from 0-200) (figure \ref{fig:MNIST-MB_eg}). The image size is 28 x 28.
\item {\bf MNIST-BB} MNIST digit are rotated and scaled to fit a box of size 32 x 32 placed on a monochrome background of size 64 x 64. The box is positioned in one of the four possible locations (top-right, top-left, bottom-right, bottom-left), with rotation between ($ -\pi/6, \pi/6 $) (figure \ref{fig:MNIST-BB_eg}). 
\item {\bf Yelp-reveiws} We derive data from Yelp Open Dataset \cite{yelpdata}. From each review, we take  the first two sentences of the review. We filtered out reviews containing sentences shorter than 5 and longer than 10 words. By design, the sentence pairs have the same topic and sentiment. We refer to this quality as coherence. Incoherent sentences have either different topic or different sentiment.
\end{enumerate}
\begin{figure}[h!]
\begin{center}
\includegraphics[scale=.27]{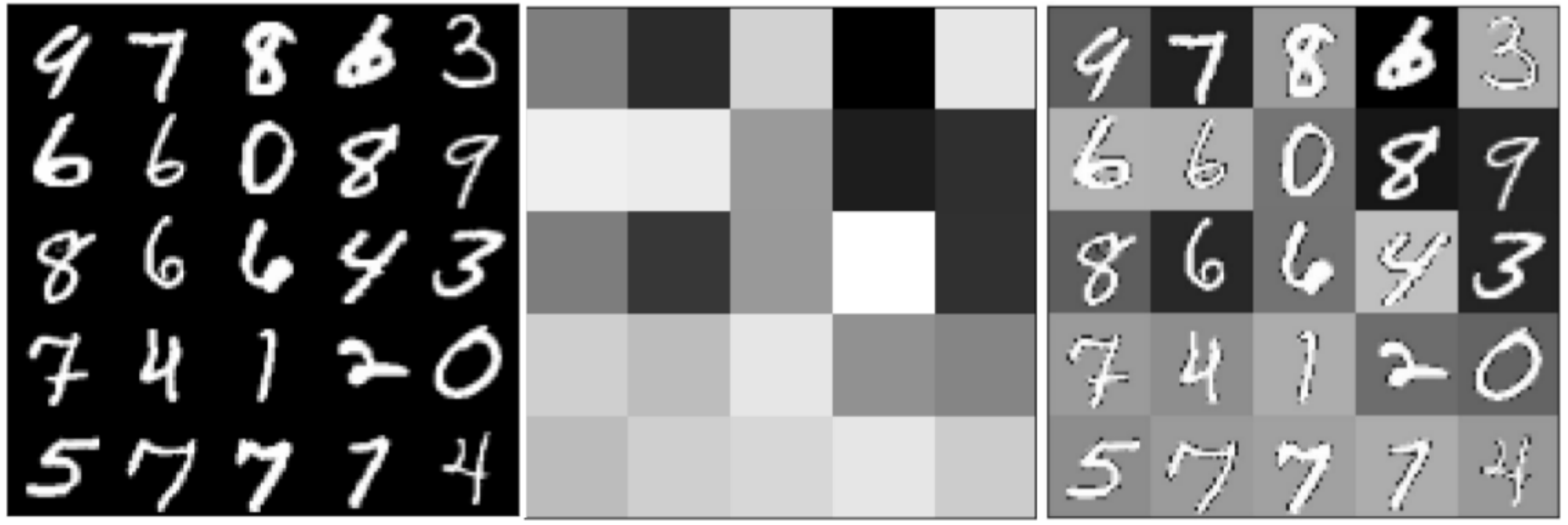}
\end{center}
\caption{Examples of MNIST-MB dataset. 5x5 grid on the left shows examples of MNIST digits (first component), middle grid shows examples of monochromatic backgrounds (second component), grid on the right shows examples of composite images.}
\label{fig:MNIST-MB_eg}
\end{figure}
\begin{figure}[h!]
\begin{center}
\includegraphics[scale=.27]{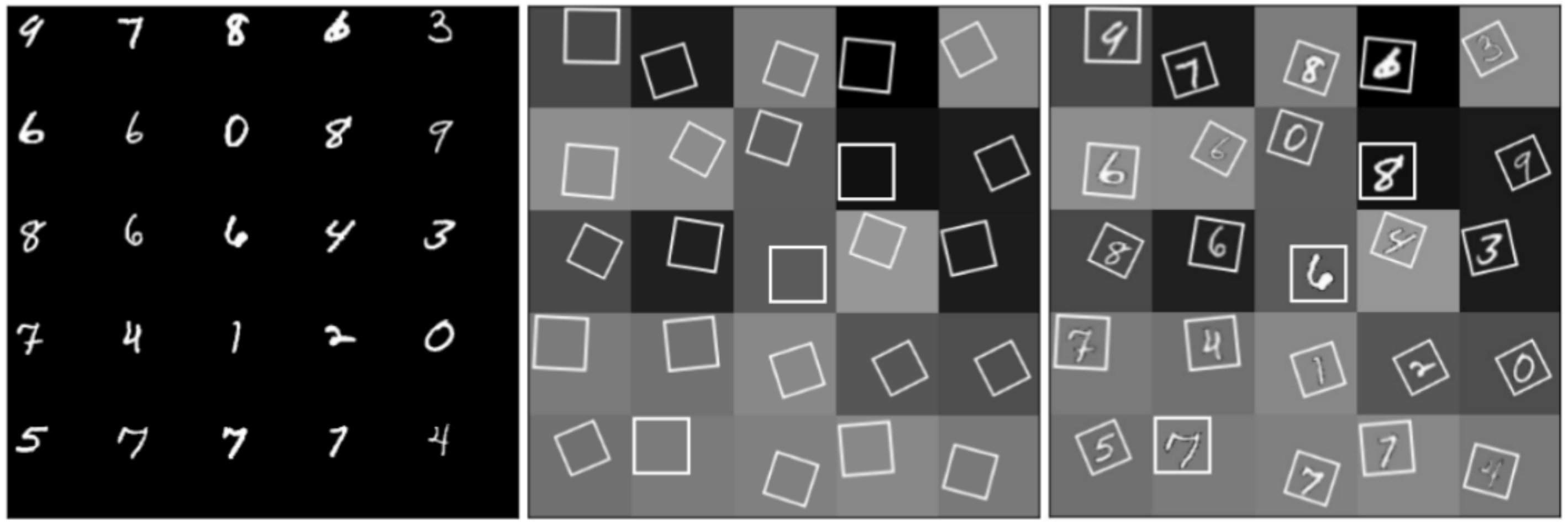}
\end{center}
\caption{Examples of MNIST-BB dataset. 5x5 grid on the left shows examples of MNIST digits (first component), middle grid shows examples of monochromatic backgrounds with shifted, rotated,  and scaled boxes (second component), grid on the right shows examples of composite images with digits transformed to fit into appropriate box. }
\label{fig:MNIST-BB_eg}
\end{figure}
\subsection{Network architectures}
\textbf{MNIST-MB, MNIST-BB} 
The component generators are DCGAN (\cite{radford2015unsupervised}) models.
Decomposition is implemented as a U-net (\cite{ronneberger2015u}) model. 
The inputs to the composition network are concatenated channel-wise. Similarly, when doing decomposition, the outputs of
the decomposition network are concatenated channel-wise before they are fed to the discriminator.

\textbf{Yelp-reviews} The component (sentence) generator samples from a marginal distribution of Yelp review sentences. Composition network is a one-layer Seq2Seq model with attention \citet{luong2015effective}. Input to composition network is a concatenation of two sentences separated by a delimiter token. Following the setting of Seq-GAN \citet{yu2017seqgan}, the discriminator ($D_c$) network is a convolutional network for sequence data.


\subsection{Experiments on MNIST-MB}

Throughout this section we assume that composition operation is known and given by 
\[
c(\rvo_1, \rvo_2)_i = \begin{cases}
o_{1,i} & \textrm{if } o_{2,i} = 0 \\
o_{2,i} & \textrm{otherwise.}
\end{cases} 
\]
In tasks where one or more generators are given, the generators have been independently trained using corresponding adversarial loss $l_{\mathbf{g_i}}$.

\textbf{Task 1:} Given $\mathbf{g_i}, i \in \{1,2\} $ and $c$, train $\mathbf{d}$.
This is the simplest task in the framework. The decomposition network learns to decompose the digits and backgrounds
correctly (figure \ref{fig:MNIST-MB-task1}) given $c$ and pre-trained generative models for both digits and background components.

\begin{figure}[h]
\begin{center}
\includegraphics[scale=.22]{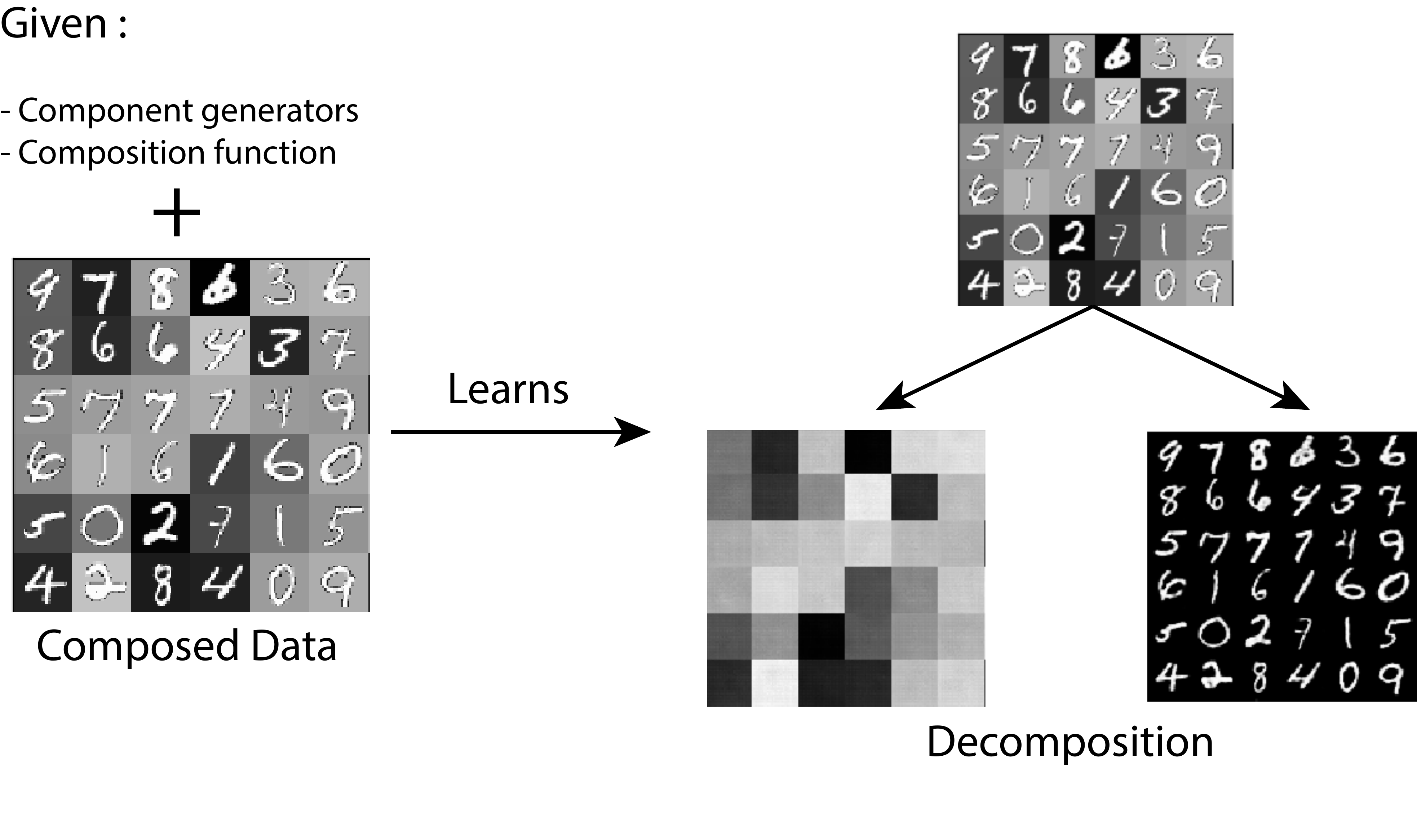}
\end{center}
\caption{Given component generators and composite data, decomposition can be learned.}
\label{fig:MNIST-MB-task1}
\end{figure}

\textbf{Task 2:} Given $\mathbf{g_1}$ and $\mathbf{g_2}$ train $\mathbf{d}$ and  $\mathbf{c}$.
Here we learn composition and decomposition jointly \ref{fig:MNIST-MB-task2}.
We find that the model learns to decompose digits accurately; interestingly however, we note
that backgrounds from decomposition network are inverted in intensity ($t(b) = 255-b$) and that the model learns to undo
this inversion in the composition function ($t(t(b)) = b$) so that cyclic consistency
($\mathbf{d}(\mathbf{c}(\rvo_1, \rvo_2)) \approx [\rvo_1, \rvo_2)]$ and $ \mathbf{c}(\mathbf{d}(\rvy)) \approx \rvy$ is satisfied.
We note that this is an interesting case where symmetries in component distributions results in the model learning
component distributions only up to a phase flip.

\begin{figure}[h]
\begin{center}
\includegraphics[scale=.22]{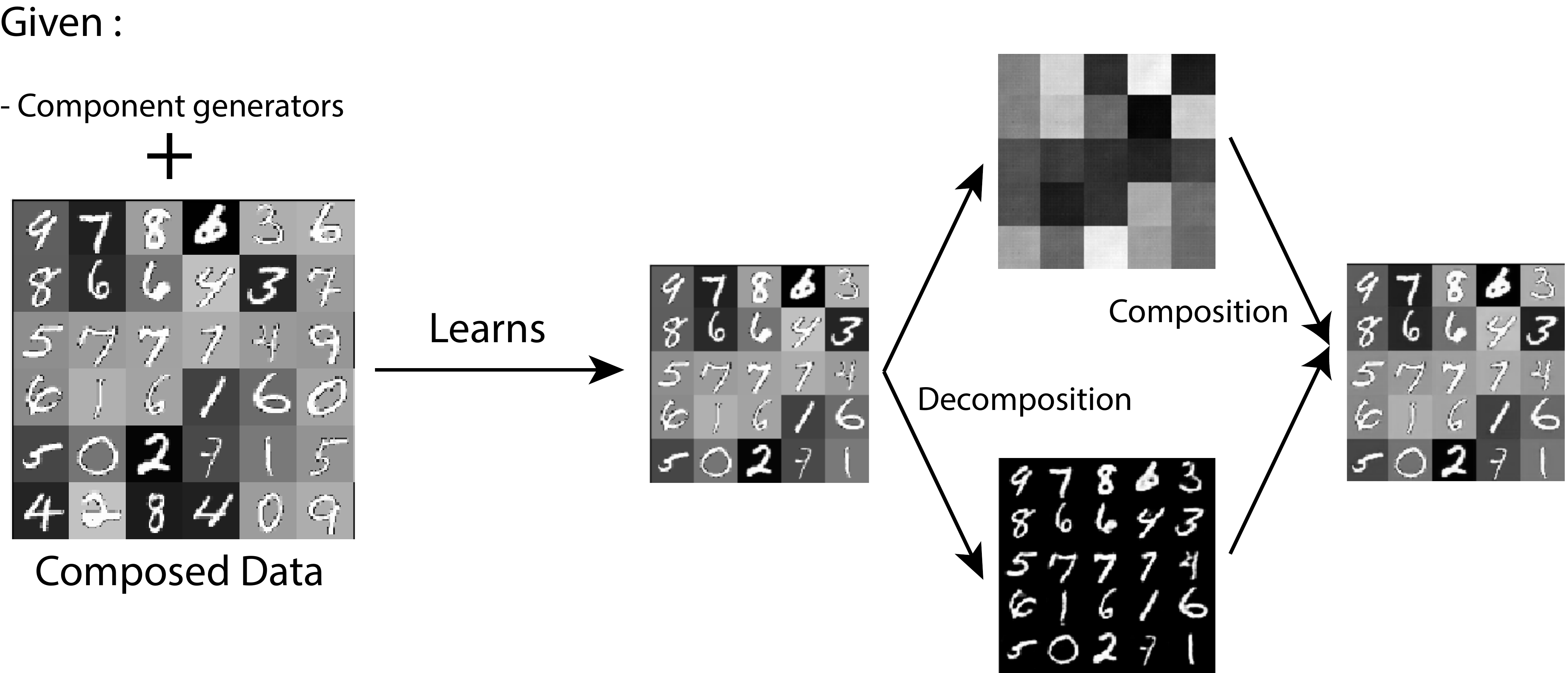}
\end{center}
\caption{Training composition and decomposition jointly can lead to ``incorrect'' decompositions that still satisfy cyclic consistency. Results from the composition and decomposition network. We note that decomposition network produces inverted background (compare decomposed backgrounds to original), and composition network inverts input backgrounds during composition (see backgrounds in re-composed image). Consequently decomposition and composition perform inverse operations, but do not correspond to the way the data was generated.}
\label{fig:MNIST-MB-task2}
\end{figure}

\textbf{Task 3:} Given $\mathbf{g_1}$ and $ \mathbf{c}$, train $ \mathbf{g_2}$ and $\mathbf{d}$.
Given digit generator and composition network, we train decomposition network and background generator (figure \ref{fig:MNIST-MB-task3}). We see that decomposition network learns to generate nearly uniform backgrounds, and the decomposition network learns to inpaint. 

\begin{figure}[h]
\begin{center}
\includegraphics[scale=.18]{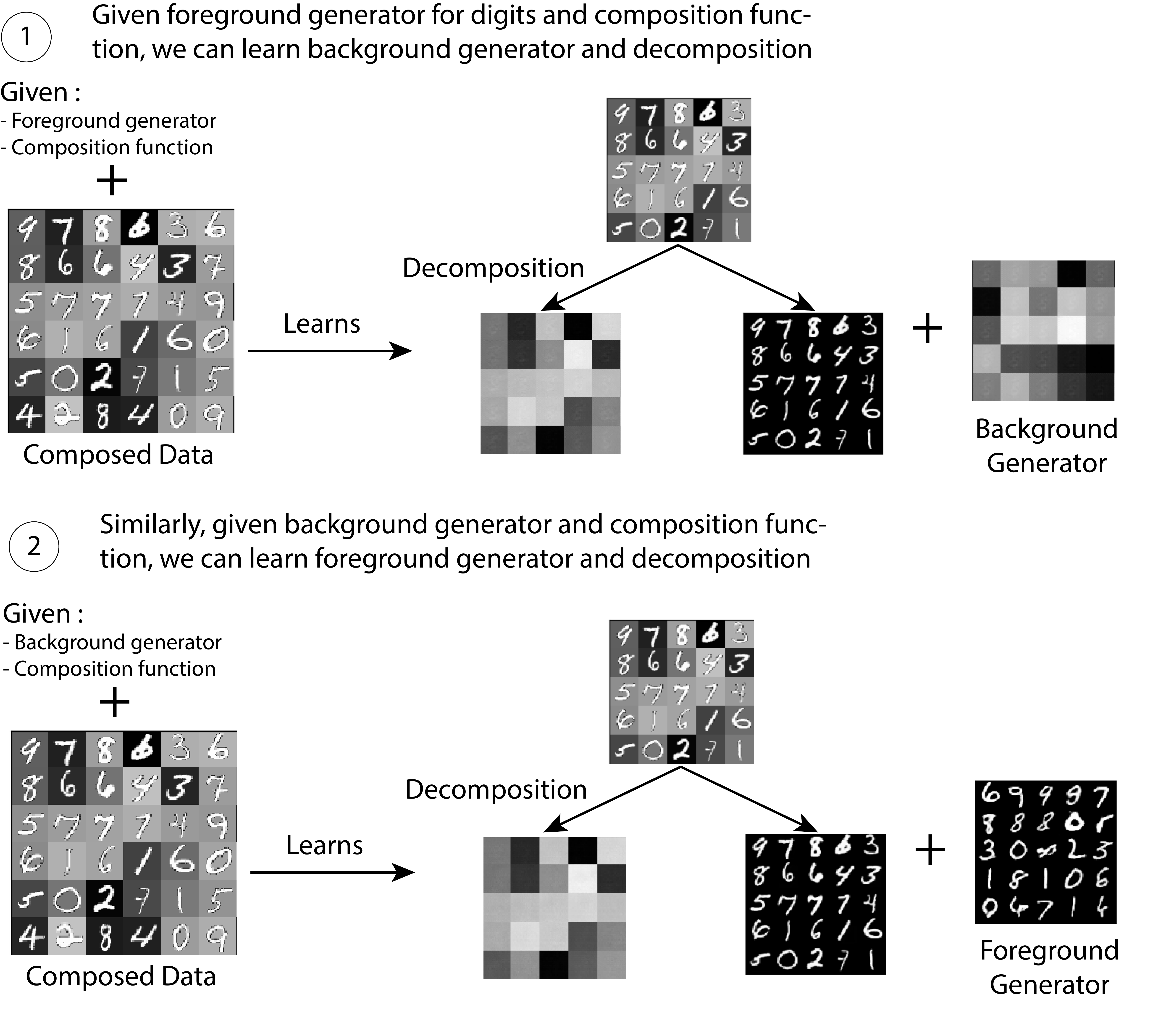}
\end{center}
\caption{Given one component, decomposition function and the other component can be learned.}
\label{fig:MNIST-MB-task3}
\end{figure}

\paragraph{FID evaluation}{ In Table \ref{table:FID} we illustrate performance of learned generators trained using the setting of Task 3, compared to baseline monolithic models which are not amenable to decomposition. As a complement to digits we also show results on Fashion-MNIST overlaid on uniform backgrounds (see appendix for examples).}

\textbf{Task 4:} Given $\mathbf{c}$, train $\mathbf{g_1}, \mathbf{g_2}$ and $\mathbf{d}$.
Given just composition, learn components and decomposition. We show that for a simple composition function, there are
many ways to assign responsibilities to different components. Some are trivial, for example the whole composite image
is generated by a single component (see \figref{fig:MNIST-MB-task4} in Appendix).

\subsection{Chain Learning - Experiments on MNIST-BB \label{chain-learning}} 
In task 3 above, we demonstrated on the MNIST-MB dataset that we can learn to model the background component and the
decomposition function from composed data assuming we are given a model for the foreground component and a composition
network. This suggests the natural follow-up question: if we have a new dataset consisting of a previously unseen 
class of foreground objects on the same distribution of backgrounds, can we then use this background model we've learned
to learn a new foreground model?

We call this concept \textbf{``chain learning"}, since training proceeds sequentially and relies on the model trained in
the previous stage. To make this concrete, consider this proof-of-concept chain (using the MNIST-BB dataset):

\begin{enumerate}
    \setcounter{enumi}{-1}
    \item Train a model for the digit ``$1$'' (or obtain a pre-trained model).
    \item Using the model for digit ``$1$'' from step 1 (and a composition network), learn the decomposition network and background
        generator from composed examples of ``$1$''s.
    \item Using the background model and decomposition network from step 2, learn a model for digit ``$2$'' from
        from composed examples of ``$2$''s.
\end{enumerate}

As shown in \figref{fig:MNIST-BB-task3} we are able to learn both the background generator (in step 1) and the foreground
generator for ``2'' (in step 2) correctly. More generally, the ability to learn a component model from composed data
(given models for all other components) allows one to incrementally learn new component models directly from composed data.

\begin{figure}[h!]
\begin{center}
\includegraphics[scale=.18]{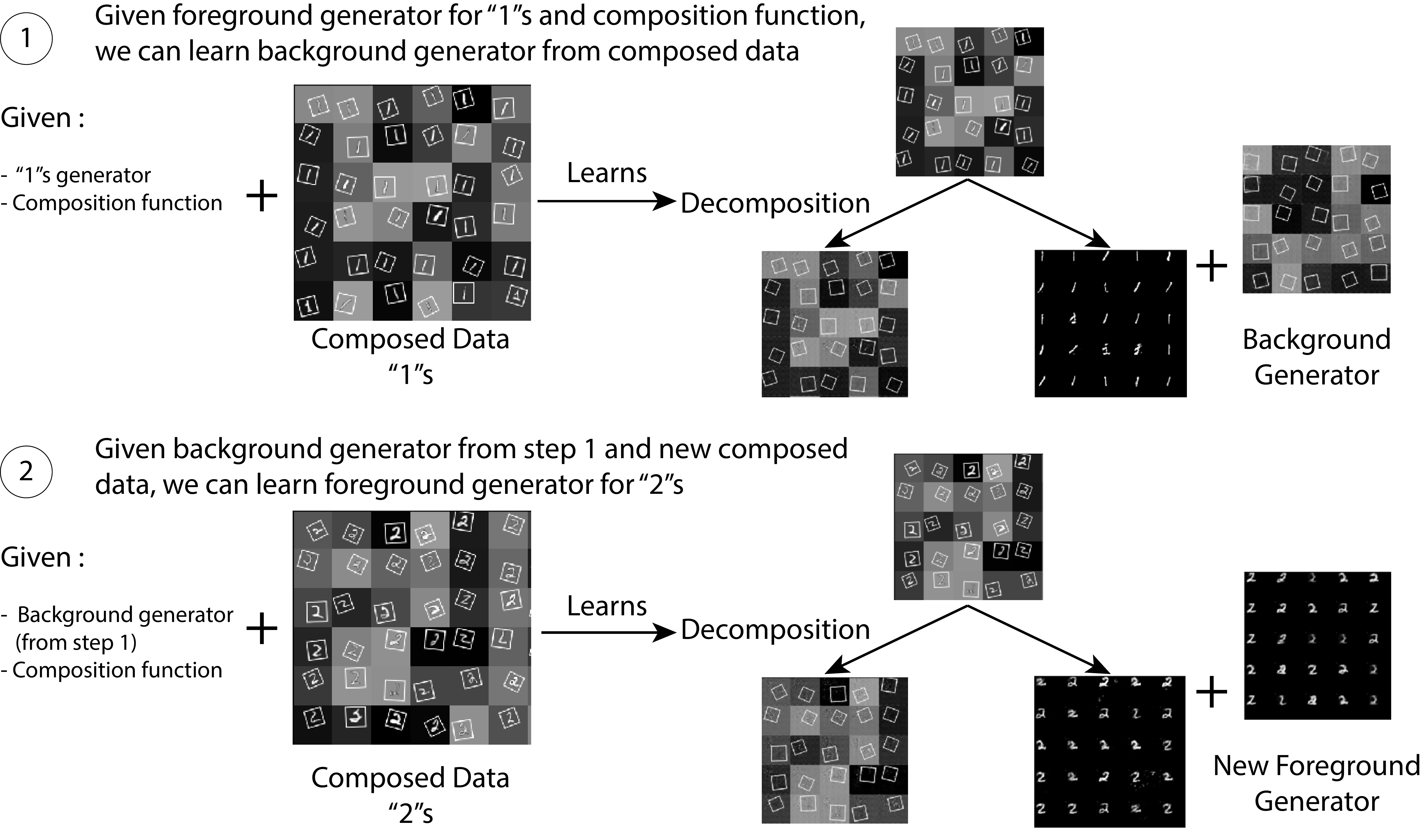}
\end{center}
\caption{Some results of chain learning on MNIST-BB. First we learn a background generator given foreground generator
for ``1" and composition network, and later we learn the foreground generator for digit ``2" given background generator
and composition network.}
\label{fig:MNIST-BB-task3}
\end{figure}

\begin{table}
\begin{tabular}{|c|C{2.25cm}|C{2.25cm}|C{2.25cm}|C{2.25cm}|}
\hline 
\multirow{2}{*}{Methods} &\multicolumn{2}{c|}{Foreground}&\multicolumn{2}{c|}{Foreground+background}\\
\cline{2-5}
& Digits & Fashion & Digits & Fashion\\
\hline
WGAN-GP &   6.622 $\pm$ 0.116  & 20.425 $\pm$ 0.130 &25.871 $\pm$ 0.182 & 21.914 $\pm$ 0.261\\
By decomposition & 9.741 $\pm$  0.144 & 21.865 $\pm$ 0.228& 13.536 $\pm$ 0.130 &  21.527 $\pm$ 0.071
  \\
\hline 
\end{tabular}
    \caption{\label{table:FID}  We show Frechet inception distance \citep{heusel2017nips} for generators trained using different datasets and methods. The ``Foreground'' column and ``Foreground+background'' reflect performance of trained generators on generating corresponding images. WGAN-GP is trained on foreground  and composed images.  Generators evaluaged in the ``By decomposition'' row are obtained as described in Task 3 -- on composed images, given background generator and composition operator.  The information processing inequality guarantees that the resulting generator cannot beat the WGAN-GP on clean foreground data. However, the composed images are better modeled using the learned foreground generator and known composition and background generator. }

\end{table}

\subsection{Experiments on Yelp data}
For this dataset, we focus on a variant of {\bf task 1}: given $\mathbf{d}$ and $\mathbf{g_1, g_2}$, train $\mathbf{c}$.
In this task, the decomposition function is simple -- it splits concatenated sentences without modification.
Since we are not learning decomposition, $l_{\mathbf{\mathbf{c-cyc}}}$ is not applicable in this task.
In contrast to MNIST task, where composition is simple and decomposition non-trivial,
in this setting, the situation is reversed.  Other parts of the optimization function are the same as section \ref{Loss_Function}.

We follow the state-of-the-art approaches in training generative models for sequence data. We briefly outline relevant aspects of the training regime.

As in Seq-GAN, we also pre-train the composition networks. The data for pre-training consist of two pairs of sentences. The output pair is a coherent pair from a single Yelp review.
Each of the input sentences is independently sampled from a set of nearest neighbors of the corresponding output sentences.
Following \citet{guu2017generating} we use Jaccard distance to find nearest neighbor sentences.
As we sample a pair independently, the input sentences are not generally coherent but the coherence can be achieved with a small number of changes. 

Discrimination in Seq-GAN is performed on an embedding of a sentence. For the purposes of training an embedding, we
initialize with GloVe word embedding \citet{pennington2014glove}. During adversarial training, we follow regime of
\citet{xu2017neural} by freezing parameters of the encoder of the composition networks, the word projection layer
(from hidden state to word distribution), and the word embedding matrix, and only update the decoder parameters. 

To enable the gradient to back-propagate from the discriminator to the generator, we applied the Gumbel-softmax
straight-through estimator from \citet{jang2016categorical}. We exponentially decay the temperature with each iteration.
\Figref{fig:textresults} shows an example of coherent composition and two failure modes for the trained composition network.
\begin{figure}[t]
\begin{tabular}{|l|l|}
\multicolumn{2}{c}{Example of coherent sentence composition}\\
\hline
Inputs &  the spa was amazing !  the owner cut my hair\\ 
Baseline & the spa was amazing ! the owner cut my hair . \\
$l_{\mathbf{c}}$ &
the spa was clean and professional and professional .  our server was friendly and helpful . \\
$l_{\mathbf{\mathbf{d-cyc}}} + l_{\mathbf{c}}$ & 
the spa itself is very beautiful . the owner is very knowledgeable and patient . \\
\hline
\multicolumn{2}{c}{
\rule{0pt}{4ex}    
Failure modes for $l_{\mathbf{\mathbf{d-cyc}}} + l_{\mathbf{c}}$ }\\
\hline
Inputs & green curry level 10 was perfect .  the owner responded right away to our yelp inquiry .\\
$l_{\mathbf{\mathbf{d-cyc}}} + l_{\mathbf{c}}$ &      the food is amazing !  the owner is very friendly and helpful .       \\
\hline 
Inputs & best tacos in las vegas !  everyone enjoyed their meals . \\         
$l_{\mathbf{\mathbf{d-cyc}}} + l_{\mathbf{c}}$ & the best buffet in las vegas .  everyone really enjoyed the food and service are amazing .    \\
\hline
\end{tabular}
\caption{\label{fig:textresults} Composition network can be trained to edit for coherence. Only the model trained using loss $l_{\mathbf{\mathbf{d-cyc}}} + l_{\mathbf{c}}$ achieves coherence in this example.
For this model, we show illustrative failure modes. In the first example, the network removes specificity of the sentences to make them coherent. In the second case, the topic is changed to a common case and the second sentence is embellished. }
\end{figure}

\section{Identifiability results \label{sec:theorems}}
In the experimental section, we highlighted tasks which suffer from identifiability problems. 
Here we state sufficient conditions for identifiability of different parts of our framework. 
Due to space constraints, we refer the reader to the appendix for the relevant proofs.
For simplicity, we consider the output of a generator network as a random variable and do away with explicit reference to generators.
Specifically, we use random variables $X$ and $Y$ to refer to component random variables, and $Z$ to a composite random variable.
Let $\range{\cdot}$ denote range of a random variable.  We define indicator function, $\mathbbm{1}[a]$ is 1 if $a$ is true and 0 otherwise.

\begin{definition} A {\bf resolving matrix}, $R$, for a composition function $c$ and random variable $X$,
is a matrix of size $|\range{Z}| \times |\range{Y}|$ with entries
$R_{z,y} = \sum_{x \in \range{X}} p(X = x)\mathbbm{1}[z = c(x,y)]$ (see \figref{fig:theorem_def}).  
\end{definition}

\begin{definition} \label{d:bijective} A composition function $c$ is bijective if it is surjective and there exists a
decomposition function $d$ such that
\begin{enumerate}
\item $d(c(x, y)) = x, y; \forall x \in \range{X}, y \in \range{Y}$
\item $c(d(z)) = z; \forall z \in \range{Z}$
\end{enumerate}
equivalently, $c$ is bijective when $c(x, y) = c(x', y')$ iff $x = x'$ and $y = y'$. We refer to decomposition function $d$ as {\bf inverse} of $c$.
\end{definition}

In the following results, we use assumptions:
\begin{assumption}\label{a:finitexy} $X, Y$ are finite discrete random variables. \end{assumption}
\begin{assumption}\label{a:zcomposition} For variables $X$ and $Y$, and composition function $c$, let random variable $Z$ be distributed according to
\begin{equation}
\label{eq:z}
p(Z = z) = \sum_x \sum_y p(Y=y) p(X = x) \mathbbm{1}[z = c(x, y)].
\end{equation} 
\end{assumption}

\begin{theorem} \label{t:learnabley} Let Assumptions~\ref{a:finitexy} and \ref{a:zcomposition} hold. Further, assume that resolving matrix of $X$ and $c$ has full column-rank.
 If optimum of
\begin{equation}
\label{eq:objective}
\inf_{p(Y')} \sup_{\norm{D}_L  \leq 1}  \E_{Z}\left[D\left(Z\right)\right] - \E_{X,Y'}\left[D\left(c\left(X,Y'\right)\right)\right]
\end{equation}
is achieved for some random variable $Y'$, then $Y$ and $Y'$ have the same distribution.
\end{theorem}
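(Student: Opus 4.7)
The plan is to recognize that the inner supremum in (\ref{eq:objective}) is the Kantorovich--Rubinstein dual of the Wasserstein-1 distance between the law of $Z$ and the law of $c(X,Y')$. Since taking $Y'$ with the same distribution as $Y$ gives objective value $0$, the infimum is $0$ and is attained. Any minimizer $Y'$ must therefore satisfy $W_1(\mathrm{Law}(Z), \mathrm{Law}(c(X,Y'))) = 0$, which on a finite (or separable) space forces equality of distributions: $p(c(X,Y') = z) = p(Z=z)$ for every $z \in \range{Z}$.

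Next I would rewrite both sides in terms of the resolving matrix. By Assumption~\ref{a:zcomposition},
\[
p(Z = z) \;=\; \sum_{y \in \range{Y}} p(Y=y)\sum_{x \in \range{X}} p(X=x)\,\mathbbm{1}[z = c(x,y)] \;=\; (R\,\vp_Y)_z,
\]
where $\vp_Y$ is the probability vector of $Y$ on $\range{Y}$. The same algebra applied to $Y'$ yields $p(c(X,Y') = z) = (R\,\vp_{Y'})_z$, provided $Y'$ is supported in $\range{Y}$. Equality of the two distributions then says $R(\vp_Y - \vp_{Y'}) = 0$, and full column rank of $R$ gives $\vp_Y = \vp_{Y'}$, which is the conclusion.

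The one subtle step — and the main obstacle — is the case where the optimal $Y'$ places mass on values $y' \notin \range{Y}$. I would handle this by noting that only the induced distribution of $c(X,Y')$ enters the objective, so we may partition $\range{Y'}$ by the ``effective column'' $y \mapsto [x \mapsto c(x,y)] \in \range{Z}^{\range{X}}$ that $y$ induces on $X$. Any $y' \notin \range{Y}$ that produces an effective column already present in $R$ can be folded into the corresponding coordinate of $\vp_{Y'}$ without changing the objective; any $y'$ producing a genuinely new column would append that column to $R$, and the resulting enlarged system $R' \vp' = R \vp_Y$ still forces $\vp'$ restricted to $\range{Y}$ to match $\vp_Y$ (with zero mass on the new columns) because the original columns of $R$ are linearly independent and together with the new columns admit a unique nonnegative representation of $R\vp_Y$ as a convex combination of old columns, by column rank.

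With that cleanup, the conclusion $\mathrm{Law}(Y) = \mathrm{Law}(Y')$ follows directly. The rest of the argument — attainment of the infimum, the K--R duality, and the linear-algebra step — is essentially bookkeeping given the finite-discrete assumption.
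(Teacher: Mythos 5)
Your proof follows essentially the same route as the paper's: identify the inner supremum as the Kantorovich--Rubinstein dual of the Wasserstein-1 distance, conclude that at the optimum $\mathrm{Law}(c(X,Y'))$ equals $\mathrm{Law}(Z)$, and then use full column rank of the resolving matrix to deduce $p_{Y'} = p_Y$ from the linear system $p_Z = R\,p_{Y'}$ (the paper writes this as the unique solution $(R^TR)^{-1}R^Tp_Z$). The only divergence is your extra discussion of minimizers supported outside $\range{Y}$ --- a case the paper sidesteps by implicitly restricting $Y'$ to distributions on $\range{Y}$, since $R$ has exactly $|\range{Y}|$ columns --- and while that digression is somewhat hand-wavy (a ``genuinely new'' column need not be linearly independent of the old ones), it is not needed for the statement as the paper intends it.
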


\begin{theorem}Let Assumptions~\ref{a:finitexy} and \ref{a:zcomposition} hold. Further, assume that $c$ is bijective. If optimum of
\begin{equation}
\label{eq:dobjective}
\inf_{d}   \E_{X,Y}\left[\lVert d\left(c\left(X,Y\right)\right)_x - X\rVert_1\right] + \E_{Z}\left[\lVert d\left(c\left(X,Y\right)\right)_y - Y\rVert_1\right] + \E_Z\left[\lVert c(d(Z)) - Z\rVert_1\right]
\end{equation}
is 0 and it is achieved for some $d'$ then $d'$ is equal to inverse of $c$.
\end{theorem}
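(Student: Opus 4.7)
The plan is to observe that each of the three terms in the objective is a nonnegative expectation, so their sum vanishing forces each term to vanish individually, and then to read off the inverse relations pointwise on the relevant supports.

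First I would verify that the infimum is indeed attainable at $0$: by Definition~\ref{d:bijective}, $c$ admits an inverse $d^\ast$ satisfying $d^\ast(c(x,y)) = (x,y)$ for every $(x,y)\in\range{X}\times\range{Y}$ and $c(d^\ast(z)) = z$ for every $z\in\range{Z}$. Plugging $d^\ast$ into the three terms in \eqref{eq:dobjective} yields $0$, so the hypothesis that some $d'$ achieves the infimum $0$ is meaningful. Because $X,Y$ are finite discrete (Assumption~\ref{a:finitexy}) and $Z=c(X,Y)$ (Assumption~\ref{a:zcomposition}), all three expectations are finite sums of nonnegative $L^1$ distances weighted by the nonnegative probabilities $p(X=x)p(Y=y)$ or $p(Z=z)$.

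Next, from $d'$ achieving $0$ I would conclude that each weighted summand is exactly $0$. Reading the second term under the natural convention $Z = c(X,Y)$ (so the outer expectation is really over the joint law of $(X,Y)$ that induces $Z$), the three vanishing conditions become: (i) $d'(c(x,y))_x = x$ for every $(x,y)$ with $p(X=x)p(Y=y) > 0$, (ii) $d'(c(x,y))_y = y$ for every such $(x,y)$, and (iii) $c(d'(z)) = z$ for every $z\in\range{Z}$. Combining (i) and (ii) gives $d'(c(x,y)) = (x,y)$ on the joint support.

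Finally I would upgrade this pointwise equality on the support of $(X,Y)$ to pointwise equality on all of $\range{Z}$. By Assumption~\ref{a:zcomposition}, every $z\in\range{Z}$ can be written as $z = c(x,y)$ for some pair in the joint support; by bijectivity of $c$ this pair is unique and equals $d^\ast(z)$. Hence $d'(z) = (x,y) = d^\ast(z)$ for every $z\in\range{Z}$, which is the claim. Condition (iii) is used only as a sanity check that $d'$ need not be specified outside $\range{Z}$. The only step requiring care is matching the outer-expectation notation in \eqref{eq:dobjective} with the joint law of $(X,Y,Z)$, and then cleanly invoking bijectivity to pass from support-of-$(X,Y)$ statements to support-of-$Z$ statements; the rest is routine from nonnegativity of $L^1$.
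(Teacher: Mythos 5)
Your proposal is correct and follows essentially the same route as the paper's proof: both arguments rest on the fact that a zero expectation of a nonnegative quantity forces the integrand to vanish on the support, and then use surjectivity/bijectivity of $c$ to conclude that $d'$ agrees with the inverse $d^\ast$ on all of $\range{Z}$. The only cosmetic difference is that you argue directly (deriving $d'(c(x,y))=(x,y)$ on the support and pushing forward), whereas the paper phrases the same reasoning as a proof by contradiction starting from a $z'$ where $d'$ and $d^\ast$ disagree.
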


\section{Conclusion} We introduce a framework of generative adversarial network composition and decomposition. In this framework, GANs can be taken apart to extract component GANs and composed together to construct new composite data models. This paradigm allowed us to separate concerns about training different component generators and even incrementally learn new object classes -- a concept we deemed chain learning. However, composition and decomposition are not always uniquely defined and hence may not be identifiable from the data. In our experiments we discover settings in which component generators may not be identifiable. We provide theoretical results on sufficient conditions for identifiability of GANs. We hope that this work spurs interest in both practical and theoretical work in GAN decomposability.

\bibliography{iclr2019_conference}
\bibliographystyle{iclr2019_conference}

\begin{figure}[h]
	\begin{center}
		\includegraphics[scale=.13]{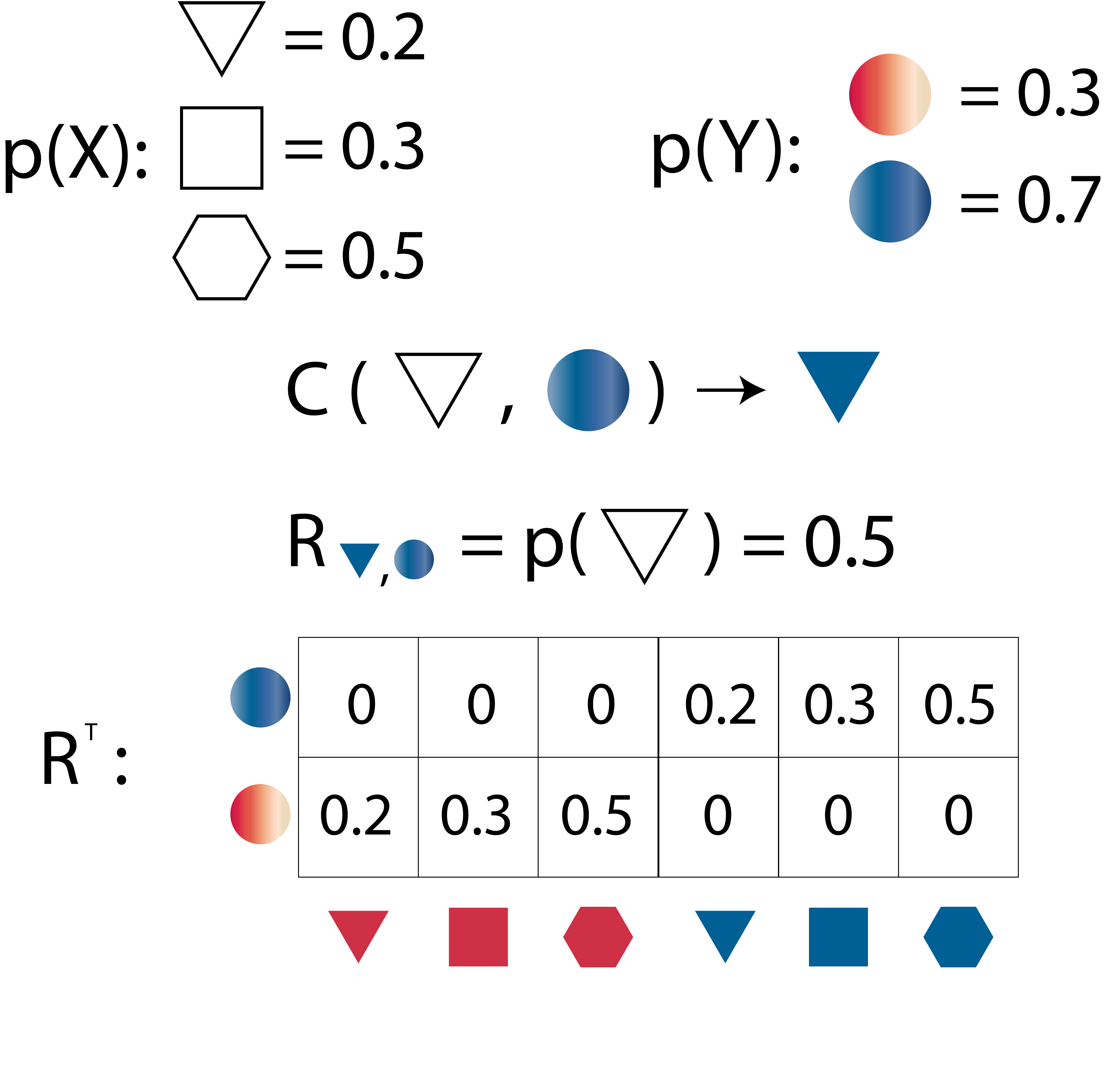}
	\end{center}
	\caption{A simple example illustrating a bijective composition and corresponding resolving matrix.}
	\label{fig:theorem_def}
\end{figure}

\begin{figure}[h!]
\begin{center}
\includegraphics[scale=.24]{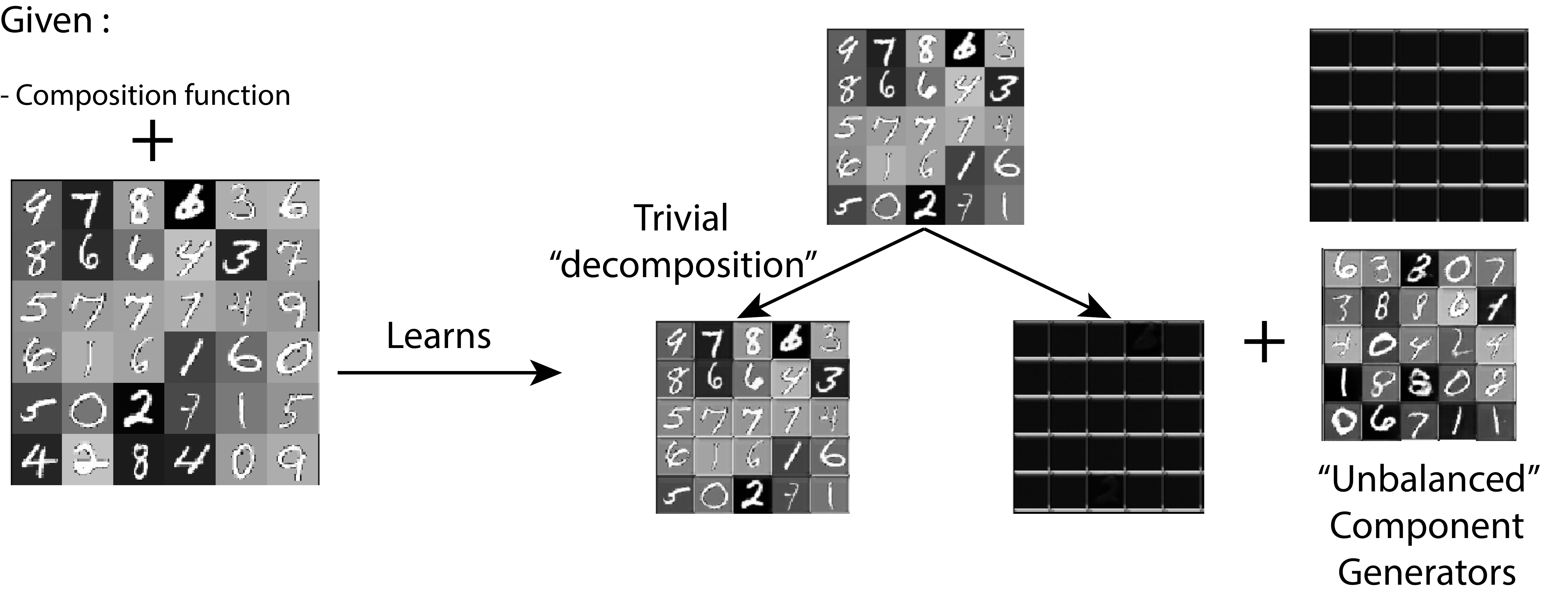}
\end{center}
\caption{Knowing the composition function is not sufficient to learn components and decomposition. Instead, the model
tends to learn a ``trivial" decomposition whereby one of the component generators tries to generate the entire composed example.}
\label{fig:MNIST-MB-task4}
\end{figure}

\section{Identifiability proofs}
We prove several results on identifiability of part generators, composition and decomposition functions as defined in the main text.
These results take form of assuming that all but one object of interest are given, and the missing object is obtained by optimizing losses
specified in the main text. 

Let $X, Y$ and $Z$ denote finite three discrete random variables. Let $\range{\cdot}$ denote range of a random variable.
We refer to  $c:  \range{X} \times \range{Y} \rightarrow \range{Z} $ as composition function, 
and  $d: \range{Z} \rightarrow \range{X} \times \range{Y} $ as decomposition function.
We define indicator function, $\mathbbm{1}[a]$ is 1 if $a$ is true and 0 otherwise.

\begin{lemma} The resolving matrix of any bijective composition $c$ has full column rank.
\label{appl:bijectiverank}
\end{lemma}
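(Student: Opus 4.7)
The plan is to exploit the combinatorial structure of $R$ directly: bijectivity of $c$ forces the columns of $R$ to have pairwise disjoint supports, from which linear independence is essentially immediate.

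First I would unpack the characterization in Definition~\ref{d:bijective}, namely that $c(x,y) = c(x',y')$ iff $x = x'$ and $y = y'$. From this I extract two facts I will reuse. For each fixed $y$, the map $x \mapsto c(x,y)$ is injective, so the nonzero entries of the $y$-th column of $R$ are exactly the atomic probabilities $\{p(X = x) : x \in \range{X}\}$, each placed in the distinct row $c(x, y)$. Setting $S_y := \{c(x, y) : x \in \range{X}\} \subseteq \range{Z}$, bijectivity furthermore gives $S_y \cap S_{y'} = \emptyset$ for $y \neq y'$, since any common element would yield a collision $c(x,y) = c(x', y')$ with $y \neq y'$.

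Next I would observe that every column of $R$ is nonzero. Because $\range{X}$ is by convention the support of $X$, every $x \in \range{X}$ contributes a strictly positive entry $p(X = x) > 0$ in row $c(x, y)$ of column $y$. Combined with the disjoint-support fact from the previous step, a one-line argument then closes the proof: in any supposed relation $\sum_y \alpha_y R_{\cdot, y} = 0$, restricting attention to the row $c(x_0, y_0)$ for an arbitrary $x_0 \in \range{X}$ isolates a single term $\alpha_{y_0}\, p(X = x_0)$, which must vanish, forcing $\alpha_{y_0} = 0$ for every $y_0 \in \range{Y}$.

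The only mild subtlety is the convention that $\range{X}$ excludes zero-probability atoms, which is what guarantees that no column collapses to zero. There is no substantive obstacle; the lemma is essentially bookkeeping on top of the bijectivity of $c$ together with positivity of $p(X=\cdot)$ on its support.
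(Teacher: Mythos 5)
Your proof is correct and takes essentially the same route as the paper's: both reduce the linear relation $\sum_y \alpha_y R_{\cdot,y}=0$ row by row, using bijectivity to isolate the single surviving term $\alpha_{y_0}\,p(X=x_0)$, which must vanish because $p(X=x_0)>0$ on $\range{X}$. The only cosmetic difference is that you phrase the key fact as disjointness of the column supports $S_y$ and index rows directly by $c(x_0,y_0)$, whereas the paper indexes by $z$, factors $\mathbbm{1}[z=c(x,y)]$ through $d(z)$, and invokes surjectivity of $c$ to cover every $y$ --- the same computation.
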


\begin{proof}
Let $R_{\cdot, y}$ denote a column of $R$. Let $d(z)_x$ denote the $x$ part of $d(z)$, and $d(z)_y$ analogously.

Assume that:
\begin{equation}
\label{appeq:linear_dep_1}
    \sum_{y} \alpha_y R_{\cdot, y} = 0
\end{equation}

or equivalently, $\forall z \in \range{Z}$:
\begin{equation}
\label{appeq:linear_dep_2}
\begin{aligned}
    \sum_{y} \alpha_y \sum_{x} P(X=x) \mathbbm{1}[z = c(x, y)] &= 0 \\
    \sum_{y} \alpha_y \sum_{x} P(X=x) \mathbbm{1}[x = d(z)_x] \mathbbm{1}[y = d(z)_y] &= 0 \\
    \alpha_{d(z)_y} P(X=d(z)_x) &= 0 \\
\end{aligned}
\end{equation}

using the the definition of $R$ in the first equality, making the substitution
$\mathbbm{1}[z = c(x,y)] = \mathbbm{1}[x = d(z)_x]\mathbbm{1}[y = d(z)_y]$ 
implied by the bijectivitiy of $c$ in the second equality and rearranging / simplifying terms for the third.

Since $P(X = x) > 0$ for all $x \in \range{X}$, $\alpha_y = 0$ for all $y \in \{y \mid y = d(z)_y \}$.
By the surjectivity of $c$, $\alpha_y = 0$ for all $y \in \range{Y}$, and $R$ has full column rank.
\end{proof}

\begin{theorem} \label{appt:learnabley} Let Assumptions~\ref{a:finitexy} and \ref{a:zcomposition} hold. Further, assume that resolving matrix of $X$ and $c$ has full column-rank.
 If optimum of
\begin{equation}
\label{appeq:objective}
\inf_{p(Y')} \sup_{\norm{D}_L  \leq 1}  \E_{Z}\left[D\left(Z\right)\right] - \E_{X,Y'}\left[D\left(c\left(X,Y'\right)\right)\right]
\end{equation}
is achieved for some random variable $Y'$, then $Y$ and $Y'$ have the same distribution.
\end{theorem}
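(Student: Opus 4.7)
The plan is to chain together two ideas: Kantorovich--Rubinstein duality reduces the objective to a distributional equality on $Z$, and then the full column-rank hypothesis on the resolving matrix lifts that equality from distributions on $\range{Z}$ back to distributions on $\range{Y}$.

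First, I would identify the inner supremum in (\ref{appeq:objective}) with the Wasserstein-1 distance $W_1$ between the laws of $c(X,Y')$ and $Z$ via Kantorovich--Rubinstein duality (applicable since the supports are finite). Taking $Y'$ to have the same law as $Y$ and to be independent of $X$ gives $c(X,Y') \stackrel{d}{=} Z$ by Assumption~\ref{a:zcomposition}, which yields $W_1 = 0$. Hence the infimum equals $0$, and any optimizer $Y'$ must satisfy $W_1(c(X,Y'), Z) = 0$. For finite discrete random variables this forces $c(X,Y')$ and $Z$ to have identical distributions.

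Next, I would re-express both distributions in terms of the resolving matrix. Writing $\vp_Y, \vp_{Y'} \in \mathbb{R}^{|\range{Y}|}$ for the probability vectors and using independence of $X$ and $Y'$:
\begin{align*}
p(c(X,Y') = z) &= \sum_{y \in \range{Y}} p(Y' = y) \sum_{x \in \range{X}} p(X = x)\, \mathbbm{1}[z = c(x,y)] \\
               &= \sum_{y} R_{z,y}\, p(Y' = y) = (R\vp_{Y'})_z,
\end{align*}
and similarly $p(Z = z) = (R\vp_Y)_z$ by Assumption~\ref{a:zcomposition}. The distributional equality from the first step therefore reads $R\vp_{Y'} = R\vp_Y$, i.e.\ $R(\vp_{Y'} - \vp_Y) = \vzero$. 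Since $R$ has full column rank by hypothesis, $\vp_{Y'} = \vp_Y$, completing the proof.

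The main obstacle is conceptual rather than technical: the statement hinges on interpreting $\E_{X,Y'}$ as an expectation under independent $X$ and $Y'$ (otherwise the linear pushforward through $R$ breaks). Once that is pinned down, the full column-rank condition does all the heavy lifting, and the remaining steps (K--R duality, $W_1 = 0 \Rightarrow$ equality of laws on a finite set) are standard. I do not expect any serious calculation; the content of the theorem is the linear-algebraic identifiability criterion captured by $R$.
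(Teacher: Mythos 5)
Your proposal is correct and follows essentially the same route as the paper's proof: reduce the inner supremum to the Wasserstein distance between $Z$ and the composed variable (so that achieving the optimum forces equality of their laws), then write both laws as $R$ applied to the probability vectors of $Y$ and $Y'$ and invoke full column rank of $R$ to conclude $\vp_{Y'}=\vp_Y$. The only cosmetic difference is that you argue injectivity of $R$ directly, whereas the paper phrases the same step via non-singularity of $R^{T}R$ and the normal equations.
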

\begin{proof} 
Let $Z'$ be distributed according to 
\begin{equation}
\label{appeq:zprime}
p(Z' = z) = \sum_x \sum_y p(Y'=y) p(X = x) \mathbbm{1}[z = c(x, y)].
\end{equation}
The objective in \eqref{appeq:objective} can be rewritten as
\begin{equation}
\label{appeq:rewritten}
\inf_{p(Y')} \underbrace{\sup_{\norm{D}_L  \leq 1} \overbrace{ \E_{Z}\left[D\left(Z\right)\right] - \E_{Z'}\left[D\left(Z'\right) \right] }^{W(Z, Z')}}_{C(Z')}
\end{equation}
where dependence of $Z'$ on $Y'$ is implicit.

Following \cite{arjovsky2017icml}, we note that $W(Z, Z') \rightarrow 0$ implies that $p(Z) \xrightarrow{\mathcal{D}} p(Z')$, hence the infimum in \eqref{appeq:rewritten} is achieved for $Z$ distributed as $Z'$.
Finally, we observe that $Z'$ and $Z$ are identically distributed if $Y'$ and $Y$ are. Hence, distribution of $Y$ if optimal for \eqref{appeq:objective}.

Next we show that there is a unique of distribution of $Y'$ for which $Z'$ and $Z$ are identically distributed, by generalizing a proof by \cite{bora2018iclr}
For a random variable $X$ we adopt notation $p_x$ denote a vector of probabilities $p_{x,i} = p(X=i)$. In this notation, \eqref{eq:z} can be rewritten as
\begin{equation}
\label{appeq:linearz}
p_z = Rp_{y}.
\end{equation}
Since $R$ is of rank $|\range{Y}|$ then $R^TR$ is of size $|\range{Y}|\times |\range{Y}|$ and non-singular.
Consequently, $(R^TR)^{-1} R^T p_z$ is a unique solution of \eqref{appeq:linearz}.
Hence, optimum of \eqref{appeq:objective} is achieved only $Y'$ which are identically distributed as $Y$.
\end{proof}
\begin{corollary}Let Assumptions~\ref{a:finitexy} and \ref{a:zcomposition} hold. Further, assume that $c$ is a bijective. If, an optimum of \eqref{appeq:objective} is achieved  is achieved for some random variable $Y'$, then $Y$ and $Y'$ have the same distribution.
\end{corollary}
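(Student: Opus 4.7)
The plan is to derive the corollary as an immediate consequence of Theorem~\ref{appt:learnabley} together with Lemma~\ref{appl:bijectiverank}. The only hypothesis of Theorem~\ref{appt:learnabley} that is not already stated in the corollary is the full column-rank condition on the resolving matrix of $X$ and $c$; and the only hypothesis of the corollary that goes beyond those of the theorem is the bijectivity of $c$. So the whole task reduces to observing that bijectivity of $c$ supplies precisely the missing rank hypothesis, and then invoking the theorem verbatim.

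Concretely, I would proceed as follows. First, note that Assumptions~\ref{a:finitexy} and \ref{a:zcomposition} are inherited directly from the hypothesis of the corollary. Second, apply Lemma~\ref{appl:bijectiverank} to the bijective composition function $c$: this yields that the resolving matrix $R$ of $X$ and $c$ has full column rank. Third, observe that all hypotheses of Theorem~\ref{appt:learnabley} are now satisfied: the assumptions on $X$, $Y$, $Z$, and $c$ are in force, and the rank condition on $R$ has just been established. Fourth, invoke Theorem~\ref{appt:learnabley}: since an optimum of~\eqref{appeq:objective} is attained at some $Y'$ by hypothesis, the theorem concludes that $Y$ and $Y'$ have the same distribution, which is precisely the conclusion of the corollary.

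There is no real obstacle here: the corollary is essentially a packaging statement that replaces the somewhat technical full-column-rank assumption of Theorem~\ref{appt:learnabley} by the more intuitive geometric assumption of bijectivity. The only conceptual content is the implication ``bijective $c$ $\Rightarrow$ resolving matrix has full column rank,'' which has already been proved as Lemma~\ref{appl:bijectiverank}. Thus the proof consists of a single sentence citing the lemma followed by an application of the theorem. If one wished to be completely explicit, the proof could be written as: ``By Lemma~\ref{appl:bijectiverank}, the resolving matrix of $X$ and $c$ has full column rank; the hypotheses of Theorem~\ref{appt:learnabley} are therefore satisfied, so its conclusion yields that $Y$ and $Y'$ have the same distribution.''
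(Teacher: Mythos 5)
Your proposal is correct and matches the paper's own proof exactly: the paper's one-line argument is precisely ``Using Lemma~\ref{appl:bijectiverank} and Theorem~\ref{appt:learnabley},'' i.e., bijectivity of $c$ yields the full column-rank condition via the lemma, after which the theorem applies verbatim. Your expanded write-up is a faithful (and somewhat more explicit) rendering of the same argument.
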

\begin{proof} Using Lemma~\ref{appl:bijectiverank} and Theorem~\ref{appt:learnabley}.
\end{proof}

\begin{theorem}Let Assumptions~\ref{a:finitexy} and \ref{a:zcomposition} hold. Further, assume that $c$ is bijective. If optimum of
\begin{equation}
\label{appeq:dobjective}
\inf_{d}   \E_{X,Y}\left[\lVert d\left(c\left(X,Y\right)\right)_x - X\rVert_1\right] + \E_{Z}\left[\lVert d\left(c\left(X,Y\right)\right)_y - Y\rVert_1\right] + \E_Z\left[\lVert c(d(Z)) - Z\rVert_1\right]
\end{equation}
is 0 and it is achieved for some $d'$ then $d'$ is equal to inverse of $c$.
\end{theorem}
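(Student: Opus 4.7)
The plan is to exploit non-negativity of every summand in the objective. Each of the three expectations is the expected value of an $L^1$-norm of a difference, hence non-negative, so if the infimum equals $0$ and is attained by some $d'$ then all three expectations must vanish at $d'$ separately.

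Next I would convert the vanishing of the first two expectations into pointwise constraints on $d'$. Assumption~\ref{a:zcomposition} makes $X$ and $Y$ independent with joint mass $p(X=x)\,p(Y=y)$ (as the product form of $p(Z=z)$ spells out), so expanding
\[
\E_{X,Y}\!\left[\lVert d'(c(X,Y))_x - X\rVert_1\right] = \sum_{(x,y)\in \range{X}\times\range{Y}} p(X=x)\,p(Y=y)\,\lVert d'(c(x,y))_x - x\rVert_1 = 0.
\]
Every weight is strictly positive by the very definition of $\range{X}$ and $\range{Y}$, which forces $d'(c(x,y))_x = x$ for each pair $(x,y)\in\range{X}\times\range{Y}$. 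The identical argument applied to the second expectation yields $d'(c(x,y))_y = y$ on the same domain. Together, these two identities are precisely condition~1 in Definition~\ref{d:bijective} for $d'$.

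The final step is to promote the pointwise identity from "image points $c(x,y)$" to all of $\range{Z}$. Because $c$ is bijective on $\range{X}\times\range{Y}$, every $z\in\range{Z}$ admits a unique preimage $(x,y)$; hence $d'(z) = d'(c(x,y)) = (x,y)$, and therefore $c(d'(z)) = z$, i.e.\ condition~2 in Definition~\ref{d:bijective}. So $d'$ restricted to $\range{Z}$ satisfies both defining properties of the inverse of $c$, and the inverse is uniquely determined by them. The vanishing of the third expectation $\E_Z[\lVert c(d'(Z))-Z\rVert_1]$ is then automatic; alternatively, one could drive the argument through that third term alone, since bijectivity of $c$ again pins down $d'$ on $\range{Z}$ from $c(d'(z))=z$.

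The main obstacle is almost entirely bookkeeping around supports. One must track carefully that (i) independence of $X$ and $Y$ implicit in Assumption~\ref{a:zcomposition} is what lets a zero expectation translate into pointwise equalities on the full product $\range{X}\times\range{Y}$, (ii) bijectivity of $c$ is invoked as a statement on $\range{X}\times\range{Y}\to\range{Z}$ so that the preimages used above exist and are unique, and (iii) the conclusion "$d'$ equals the inverse of $c$" is understood as equality of functions on $\range{Z}$, since the behavior of $d'$ outside $\range{Z}$ is neither constrained by the objective nor referenced in Definition~\ref{d:bijective}.
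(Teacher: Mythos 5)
Your proposal is correct and rests on the same ingredients as the paper's proof: non-negativity of the $L^1$ terms forces pointwise identities on the support, and bijectivity of $c$ then pins $d'$ down on all of $\range{Z}$. The only difference is presentational -- you argue directly that $d'$ satisfies both conditions of Definition~\ref{d:bijective}, while the paper runs the same facts as a contradiction from a hypothetical point $z'$ where $d'$ and the inverse disagree -- and your explicit bookkeeping about supports and about equality holding only on $\range{Z}$ is, if anything, a bit more careful than the original.
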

\begin{proof}
We note that for a given distribution, expectation of a non-negative function -- such as norm -- can only be zero if the function is zero on the whole support of the distribution. 

Assume that optimum of 0 is achieved but $d'$ is not equal to inverse of $c$, denoted as $d^*$. Hence, there exists a $z'$
such $(x',y') = d'(z') \neq d^*(z') = (x^*, y^*)$. 
By optimality of $d'$, $c(d'(z')) = z'$ or the objective would be positive. Hence, $c(x',y') = z'$.
By Definition~\ref{d:bijective}, $c(d^*(z')) = z'$, hence $c(x^*,y^*) = z'$. However, $d'(c(x^*,y^*)) \neq (x^*, y^*)$ and  expectation in \eqref{appeq:dobjective} over $X$ or $Y$ would be positive. Consequently, the objective would be positive, violating assumption of optimum of 0. Hence, inverse of $c$ is the only function which achieves optimum 0 in \eqref{appeq:dobjective}.

\end{proof}

\section{Implementation Details}
\subsection{Architecture for MNIST / Fashion-MNIST}
We use U-Net \citep{ronneberger2015u} architecture for MNIST-BB for decomposition and composition networks.
The input into U-Net is of size $28$x$28$ ($28$x$28$x2) the outputs are of size $28$x$28$x2 ($28$x$28$) for decomposition (composition).
In these networks filters are of size 5x5 in the deconvolution and convolution layers.
The convolution layers are of size $32$, $64$, $128$, $256$ and deconvolution layers are of
size $256$, $128$, $64$, $32$. We use leaky rectifier linear units with alpha of $0.2$.
We use sigmoidal units in the final output layer.

For MNIST-MB and Fashion-MNIST composition networks, we used $2$ layer convolutional neural net with $3$x$3$ filter.
For decomposition network on these datasets, we used fully-convolutional network \citep{long2015fully}.
In this network filters are of size $5$x$5$ in the deconvolution and convolution layers.
The convolution layers are of size $32$, $64$, $128$ and deconvolution layers are $128$, $64$, $32$. We use leaky rectifier linear units with alpha of $0.2$. We use sigmoidal units in the output layer.

The standard generator and discriminator architecture of DCGAN framework was used for images of $28$x$28$ on MNIST-MB and Fashion-MNIST, and $64$x$64$ on MNIST-MB dataset.   

\subsection{Architecture for Yelp-Reviews}
We first tokenize the text using the nltk Python package. We keep the 30,000 most frequently occuring
tokens and represent the remainder as ``unknown''. We encode each token into a 300 dimensional word
vector using the standard GloVe \citep{pennington2014glove} embedding model.

We use a standard sequence-to-sequence model for composition. The composition network takes as input
a pair of concatenated sentences and outputs a modified pair of sentences. We used a encoder-decoder
network where the encoder/decoder is a 1-layer gated recurrent unit (GRU) network with a hidden state
of size $512$. In addition, we implemented an attention mechanism as proposed in \cite{luong2015effective} in the
decoder network.

We adopt the discriminator structure as described in SeqGAN \citep{yu2017seqgan}.
We briefly describe the structure at a high level here, please refer to the SeqGAN paper
for additional details. SeqGAN takes as input the pre-processed sequence of word embeddings.
The discriminator takes the embedded sequence and feeds it through a set of convolution layers
of size (200, 400, 400, 400, 400, 200, 200, 200, 200, 200, 320, 320) and of filter size
(1, 2, 3, 4, 5, 6, 7, 8, 9, 10, 15, 20).
These filters further go through a max pooling layer with an additional “highway network structure”
on top of the feature maps to improve performance.
Finally the features are fed into a fully-connected layer and produce a real value.

\subsection{Training}
We kept the training procedure consistent across all experiments. During training, we initialized
weights as described in \cite{he2015delving}, weights were updated using ADAM \citep{kingma2014adam}
(with beta1=0.5, and beta2=0.9) with a fixed learning rate of 1e-4 and a mini-batch size of 100.
We applied different learning rate for generators/discriminators according to TTUR \citep{heusel2017nips}.
The learning rate for discriminators is $3*10^{-4}$ while for generator is $10^{-4}$. We perform 1
discriminator update per generator update. Results are reported after training for $100000$ iterations.

\section{Additional Examples on Fashion-MNIST \label{fashion-mnist-appendix}}

In this section, we show some examples of learning task 3
(learning 1 component given the other component and composition) as well as
an example of cross-domain chain learning (learning the background on MNIST-MB and using that to
learn a foreground model for T-shirts from Fashion-MNIST).

As before, given 1 component and the composition operation, we can learn the other component.

\begin{figure}[h]
\begin{center}
\includegraphics[scale=.18]{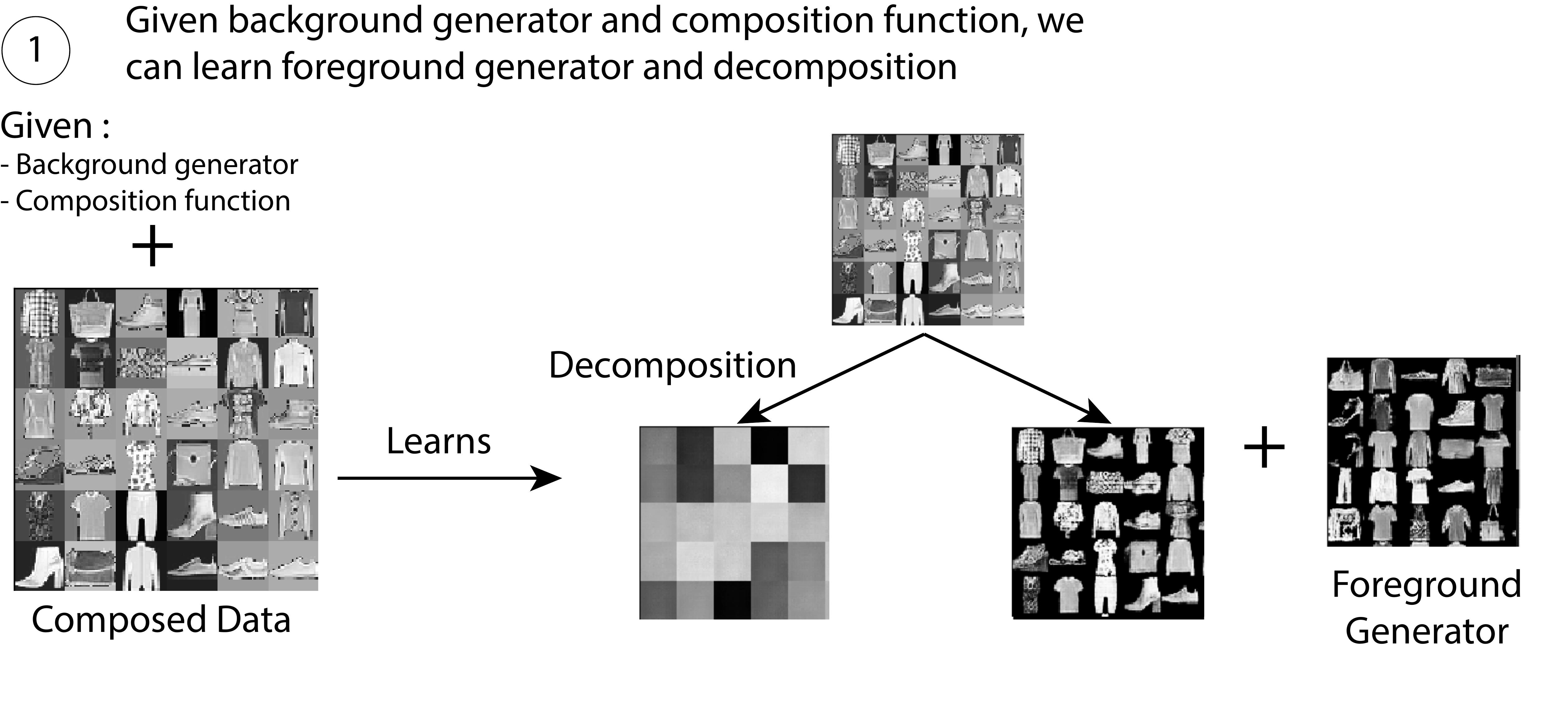}
\end{center}
\caption{Given one component, decomposition function and the other component can be learned. We show this in the case of Fashion-MNIST}
\label{fig:Fashion-MNIST-task3}
\end{figure}

As an example of reusing components, we show that a background generator learned from MNIST-MB can be used to
learn a foreground model for T-shirts on a similar dataset of Fashion-MNIST examples overlaid on uniform backgrounds.

\begin{figure}[h]
\begin{center}
\includegraphics[scale=.18]{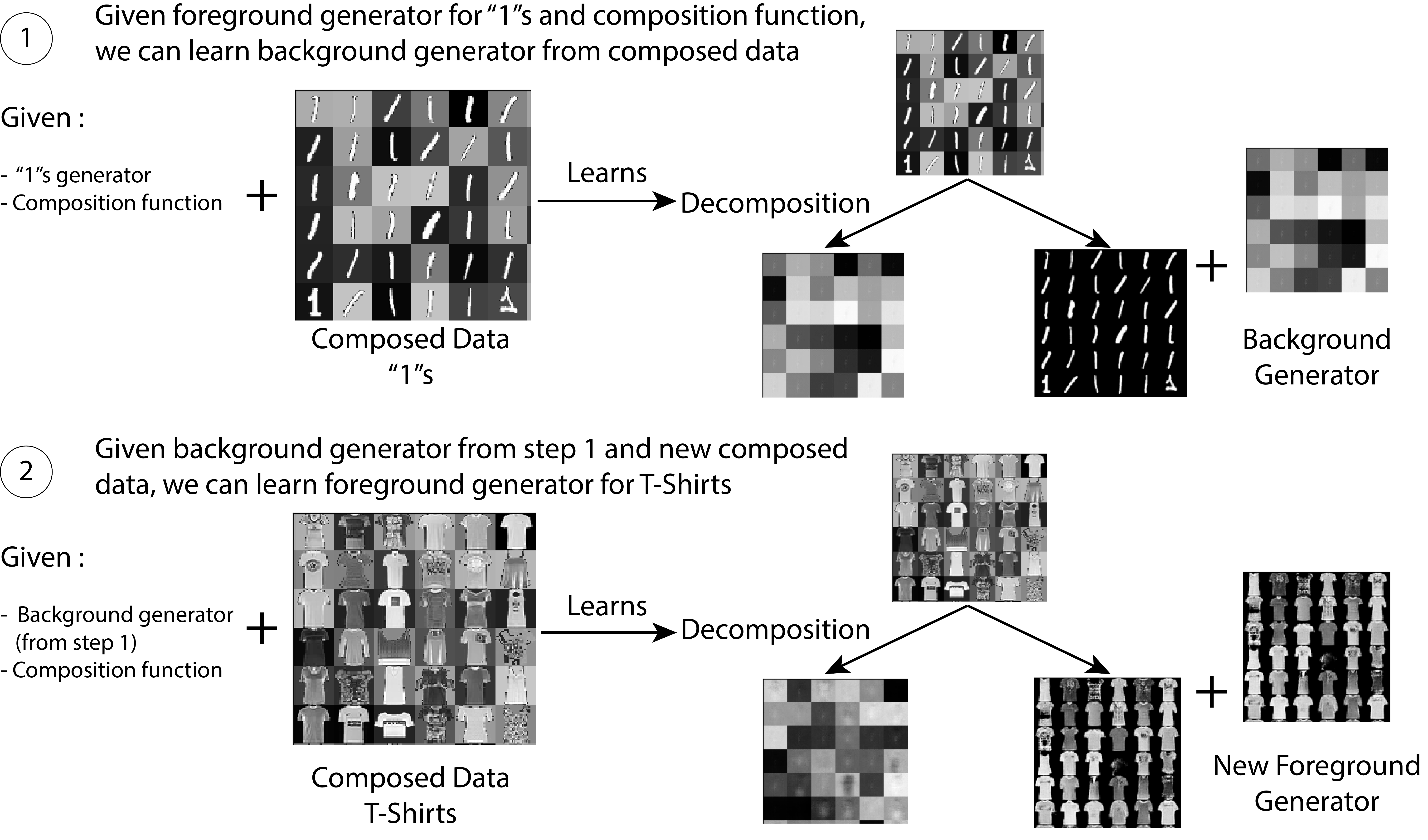}
\end{center}
\caption{Some results of chain learning on MNIST to Fashion-MNIST.
    First we learn a background generator given foreground generator for digit ``1" and composition network,
    and later we learn the foreground generator for T-shirts given background generator.}
\label{fig:Fashion-MNIST-Chain}
\end{figure}

\end{document}